\documentclass{new_tlp}
\usepackage{mathptmx}

\usepackage{amssymb}
\usepackage{amsmath}
\usepackage[ruled,linesnumbered,vlined]{algorithm2e}

\usepackage{booktabs}

\usepackage{hyperref}
\usepackage{tikz}
\usetikzlibrary{positioning, fit, arrows, backgrounds}

\newtheorem{example}{Example}
\newtheorem{theorem}{Theorem}

\newcommand{\tuple}[1]{\ensuremath{\langle{#1}\rangle}\xspace}

\usepackage{listings}

\usepackage{multirow}
\usepackage{rotating}

\makeatletter
\lst@Key{countblanklines}{true}[t]{\lstKV@SetIf{#1}\lst@ifcountblanklines}
\lst@AddToHook{OnEmptyLine}{%
    \lst@ifnumberblanklines\else%
    \lst@ifcountblanklines\else%
    \advance\c@lstnumber-\@ne\relax%
    \fi%
    \fi}
\makeatother
\lstdefinelanguage{asp}{
    breakatwhitespace=true,
    captionpos=b,
    numbers=left,
    numbersep=5pt,
    numberblanklines=false,
    countblanklines=false,
    commentstyle=\colour{gray},
    frame=bt, framexbottommargin=5pt, framextopmargin=5pt,
    aboveskip=5pt, belowskip=5pt,
    abovecaptionskip=10pt,
    basicstyle=\small\ttfamily
}
\lstnewenvironment{asp}{
    \lstset{
        language=asp,
        showstringspaces=false,
        keepspaces=true,
        formfeed=\newpage,
        tabsize=4,
        numbers=none,
        breaklines=true,
        literate={~} {$\sim$}{1},
        frame=none,
    }
}{}

\newcommand\bcmdtab{\noindent\bgroup\tabcolsep=0pt%
  \begin{tabular}{@{}p{10pc}@{}p{20pc}@{}}}
\newcommand\ecmdtab{\end{tabular}\egroup}

\title[Logic-Guided Data Extraction with ASP and LLMs]{Logic-Guided Data Extraction with Answer Set Programming and Large Language Models}

\author[Alviano et al.]
{
Mario Alviano\textsuperscript{1}, Lorenzo Grillo\textsuperscript{1,2}, Nicola Leone\textsuperscript{1} and Fabrizio {Lo Scudo}\textsuperscript{1} \\
(1) University of Calabria, Rende, Italy\\
(2) University of Campania Luigi Vanvitelli, Caserta, Italy\\
\email{\{mario.alviano,fabrizio.loscudo\}@unical.it}
}

\begin{document}

\label{firstpage}

\maketitle

\begin{abstract}
When Large Language Models (LLMs) are used for semantic data extraction from unstructured text, producing candidate relational facts from natural language, they may remain unreliable for tasks requiring complex combinatorial reasoning and global consistency. This paper proposes a logic-guided data extraction framework combining LLM-based extraction with Answer Set Programming (ASP). The LLM produces candidate facts, whereas ASP performs validation, inference, consistency checking, and control. Unlike existing pipelines that query the LLM independently for all target predicates, the proposed approach uses ASP reasoning to identify which predicates are logically admissible at each stage and to guide extraction queries. By interleaving LLM calls with ASP derivation, the framework infers logically implied facts without further extraction and detects inconsistencies early. We formalize the pipeline and prove that, under mild assumptions, it is equivalent to the baseline approach with respect to the final extracted facts, while requiring fewer LLM calls. We also introduce a caching mechanism for logic-based control queries, exploiting monotonicity of conjunctive queries over incrementally constructed fact sets to reduce solver invocations. Experiments on ASP-derived benchmarks show that the framework reduces LLM calls and improves extraction quality by mitigating spurious outputs, demonstrating the value of non-monotonic logic programming for controlled semantic extraction.
\end{abstract}

\begin{keywords}
Answer Set Programming, Semantic Data Extraction, Logic Programming, Large Language Models, Neural–Symbolic Integration
\end{keywords}

\section{Introduction}

Large Language Models (LLMs), surveyed by \citeNP{DBLP:journals/tist/NaveedKQSAUABM25}, have recently been adopted for semantic data extraction and parsing from unstructured text in the works by \citeNP{DBLP:journals/corr/abs-2312-17617}, \citeNP{10.1145/3748304}, \citeNP{DBLP:journals/corr/abs-2403-02901}, and \citeNP{DBLP:journals/tkde/ZhangLDBL23}, enabling the automatic construction of structured representations such as relational facts or knowledge bases.
In logic-based extraction pipelines, 
such as the [LLM]+ASP framework proposed by \citeNP{DBLP:conf/acl/YangI023} and the LLMASP framework by \citeNP{DBLP_conf_ijcai_AlvianoGSR25}, 
extraction is commonly performed independently for each target predicate, with logical validation and inference applied only afterward.
While effective, this design often requires issuing a large number of extraction queries and coping with incomplete or spurious outputs, especially in domains where predicates are logically interdependent.
Crucially, this approach decouples extraction from \textit{reasoning}: the LLM is used merely as a text parser, ignoring the deductive steps that could actively guide the information retrieval.

This work lies at the intersection of Neuro-Symbolic AI and Information Extraction.
Unlike probabilistic neuro-symbolic frameworks such as DeepProbLog by \citeNP{DBLP:conf/nips/ManhaeveDKDR18} and NeurASP by \citeNP{DBLP:conf/ijcai/YangIL20} that integrate neural outputs via probabilistic semantics, we focus on utilizing pre-trained LLMs as black-box oracles.
In this modular setting, existing approaches typically employ logic only for post-hoc validation, as proposed by \citeNP{DBLP:conf/ijcai/EiterGHO23}, or for iterative refinement, as explored by \citeNP{DBLP:conf/nips/MadaanTGHGW0DPY23} and \citeNP{DBLP:conf/nips/ShinnCGNY23}.
Answer Set Programming (ASP), introduced by \citeNP{MarekT99} and \citeNP{Niemela99}, is particularly well-suited for this task due to its ability to represent complex relational structures and non-monotonic dependencies, as demonstrated by \citeNP{DBLP:journals/debu/CaliCGL02} and \citeNP{DBLP:conf/lpnmr/LeoneEFFGGGKILLLNRRST05}.
However, when ASP is used solely as a reasoning component after extraction (as in LLMASP), the logical structure of the domain is not exploited to guide the extraction process itself.
As a consequence, logical dependencies that could inform extraction decisions are ignored during the extraction phase.

\begin{figure}[t]
    \centering

\begin{tikzpicture}[
    scale=0.7,
    >=stealth,
    dot/.style={circle, draw=black!80, fill=blue!10, thick, minimum size=8mm, inner sep=0pt, font=\sffamily\small},
    layerbox/.style={draw=black!50, dashed, thick, rounded corners, inner sep=12pt},
    labeltxt/.style={font=\sffamily\bfseries, text=black!90}
]

\node[dot](a) at (0,0) {ciao};
\node[dot] (n1) at (0, 2) {n1};
\node[dot] (n2) at (0, 0) {n2};
\node[dot] (n3) at (0, -2) {n3};

\node[layerbox, fit=(n1)(n2)(n3)] (l0) {};
\node[labeltxt, above=2mm of l0] {Layer 0};

\node[dot] (n4) at (3, 2) {n4};
\node[dot] (n5) at (3, 0) {n9};  
\node[dot] (n6) at (3, -2) {n6};

\node[layerbox, fit=(n4)(n5)(n6)] (l1) {};
\node[labeltxt, above=2mm of l1] {Layer 1};

\node[dot] (n7) at (6, 2) {n7};
\node[dot] (n8) at (6, 0) {n8};
\node[dot] (n9) at (6, -2) {n5};

\node[layerbox, fit=(n7)(n8)(n9)] (l2) {};
\node[labeltxt, above=2mm of l2] {Layer 2};


\draw[thick, red!80] (n1) to (n5);
\draw[thick, black!80] (n1) to (n4);

\draw[thick, black!80] (n2) to (n5);

\draw[thick, black!80] (n3) -- (n5);
\draw[thick, black!80] (n3) -- (n6);

\draw[thick, red!80] (n4) -- (n2);
\draw[thick, red!80] (n4) -- (n8);

\draw[thick, red!80] (n5) -- (n7);
\draw[thick, red!80] (n5) -- (n8);

\draw[thick, red!80] (n6) -- (n7);
\draw[thick, black!80] (n6) -- (n9);

\end{tikzpicture}
\qquad
\begin{tikzpicture}[
    scale=0.7,
    >=stealth,
    dot/.style={circle, draw=black!80, fill=blue!10, thick, minimum size=8mm, inner sep=0pt, font=\sffamily\small},
    layerbox/.style={draw=black!50, dashed, thick, rounded corners, inner sep=12pt},
    labeltxt/.style={font=\sffamily\bfseries, text=black!90}
]

\node[dot](a) at (0,0) {ciao};
\node[dot] (n1) at (0, 2) {n1};
\node[dot] (n2) at (0, 0) {n2};
\node[dot] (n3) at (0, -2) {n3};

\node[layerbox, fit=(n1)(n2)(n3)] (l0) {};
\node[labeltxt, above=2mm of l0] {Layer 0};

\node[dot] (n4) at (3, 2) {n4};
\node[dot] (n5) at (3, 0) {n9};
\node[dot] (n6) at (3, -2) {n6};

\node[layerbox, fit=(n4)(n5)(n6)] (l1) {};
\node[labeltxt, above=2mm of l1] {Layer 1};

\node[dot] (n7) at (6, 0) {n7};
\node[dot] (n8) at (6, 2) {n8};
\node[dot] (n9) at (6, -2) {n5};

\node[layerbox, fit=(n7)(n8)(n9)] (l2) {};
\node[labeltxt, above=2mm of l2] {Layer 2};


\draw[thick, red!80] (n1) to (n5);
\draw[thick, black!80] (n1) to (n4);

\draw[thick, black!80] (n2) to (n5);

\draw[thick, black!80] (n3) -- (n5);
\draw[thick, black!80] (n3) -- (n6);

\draw[thick, red!80] (n4) -- (n2);
\draw[thick, black!80] (n4) -- (n8);

\draw[thick, black!80] (n5) -- (n7);
\draw[thick, black!80] (n5) -- (n8);

\draw[thick, black!80] (n6) -- (n7);
\draw[thick, black!80] (n6) -- (n9);

\end{tikzpicture}
\vspace{1em}

    \caption{{Comparison of a three-layer graph before (left) and after (right) applying a crossing minimization strategy. Intersecting edges are highlighted in red. Reordering the nodes, specifically in Layer 2, significantly reduces the total number of crossings and improves the readability of the layout.}}
    \label{fig:crossing}
\end{figure}

\begin{example}[LLM Extraction with Logic-Based Reasoning]\label{ex:motivation}
Consider the \emph{Crossing Minimization} problem, where one seeks an ordering of nodes within layers of a graph that minimizes edge crossings {(see Figure~\ref{fig:crossing})}.
Solving such combinatorial reasoning tasks directly with a LLM is challenging, as it requires enforcing global consistency constraints and exploring complex structural dependencies.
Rather than expecting the LLM to solve reasoning tasks implicitly, better results can be obtained through a clear division of the work:
the LLM extracts candidate facts from text, and ASP performs reasoning and consistency checks.
Here, the extraction task consists in building a structured graph representation from natural language descriptions.
For example, a text may describe a (possibly layered) undirected graph by mentioning nodes, edges, and, when relevant, information about layers.
LLM-based extractors can be queried to produce facts such as
\lstinline|edge(n1,n9)|, \lstinline|layer_size(0,3)|, or \lstinline|in_layer(0,n1)|.
However, issuing extraction requests for all target predicates independently, as in LLMASP, can be both inefficient and error-prone:
it may lead to redundant oracle calls and to spurious extractions, such as layer assignments when no layering information is present in the text.
This motivates logic-based admissibility conditions that determine when extraction queries should be issued.
We will use this layered-graph scenario as a running example throughout the paper (see Example~\ref{ex:running:1}).
\hfill$\blacksquare$
\end{example}

In this paper, we propose a logic-guided data extraction framework in which ASP plays an active role in controlling and structuring the interaction with an LLM-based extractor.
Rather than invoking the LLM independently for all target predicates (a bottom-up approach), our framework uses ASP reasoning to determine which predicates are logically admissible at each stage of the extraction process.
Parallel to the reasoning-acting decomposition introduced in ReAct by \citeNP{DBLP:conf/iclr/YaoZYDSN023}, and conceptually akin to the evaluation of external atoms in HEX-programs described by \citeNP{DBLP:journals/ki/EiterGIKRSW18} and to the execution of external actions in ActHEX introduced by \citeNP{DBLP:conf/lpnmr/FinkGIRS13}, the extraction is interleaved with ASP reasoning and derivation, allowing logically implied facts to be inferred without any additional extraction and enabling early consistency checks on partial data.
This query-driven behavior is similar in spirit to goal-directed ASP evaluation as developed by \citeNP{DBLP:conf/datalog/AriasCCG19}, ensuring that extraction resources are spent only on facts that are consistent with the current partial model.
We formalize the proposed pipeline and show that, under mild assumptions on the extraction oracle, it is equivalent to a baseline LLM-based extraction approach in terms of the final extracted facts, while never increasing the number of calls to the LLM and potentially reducing them.
We further introduce a caching mechanism for logic-based control queries, exploiting monotonicity properties of conjunctive queries over incrementally constructed fact sets to reduce the number of ASP solver invocations.
An experimental evaluation on benchmarks derived from standard Answer Set Programming domains demonstrates that the proposed framework substantially reduces the number of LLM calls and, in practice, improves extraction quality by mitigating spurious outputs. 
These results show that non-monotonic logic programming can effectively serve as a control mechanism for semantic data extraction from text, complementing the generative capabilities of LLMs.

\paragraph{Contributions.}
This paper makes the following contributions:
(i) we introduce a clean formal abstraction of LLM-based extraction pipelines, exemplified by LLMASP;
(ii) we propose a logic-guided extraction framework that interleaves extraction and ASP reasoning via guard-based admissibility conditions;
(iii) we establish equivalence with the baseline pipeline under a guard-respecting oracle assumption and prove efficiency properties;
(iv) we present an execution algorithm with sound guard caching;
(v) we experimentally evaluate the framework on ASP-derived benchmarks.

\paragraph{Structure.}
The remainder of the paper is organized as follows. 
Section~\ref{sec:background} introduces the baseline extraction setting and notation. 
Section~\ref{sec:guided} presents the logic-guided extraction framework and its formal properties. 
Section~\ref{sec:implementation} presents an execution algorithm for logic-guided extraction and its realization of the proposed semantics.
Section~\ref{sec:experiments} reports the experimental evaluation, and
Section~\ref{sec:conclusion} concludes.

\section{Preliminaries and Baseline Setting}\label{sec:background}

This section introduces a formal abstraction of LLM-based extraction pipelines,
including existing approaches such as LLMASP.
While inspired by prior work, this abstraction is new and will serve as the
foundation for the logic-guided framework introduced in Section~3.
The presentation is intentionally minimal and focuses only on concepts required to formalize logic-guided extraction.

\paragraph{Text, Predicates, and Facts.}
Let $T$ denote a finite unstructured text document, and $\mathcal{T}$ be the domain of text documents.  
Let $\mathcal{P}$ be a fixed finite set of predicates, each with a fixed arity.
A \emph{ground fact} is an atom of the form $p(c_1,\dots,c_k)$, where $p \in \mathcal{P}$ has arity $k \geq 0$ and each $c_i$ is a constant from a fixed finite domain of constants.  
Let $\mathcal{F}_p$ denote the set of all ground facts over predicate $p$, for all $p \in \mathcal{P}$, and $\mathcal{F}$ be the set of all ground facts, i.e., $\mathcal{F} = \bigcup_{p \in \mathcal{P}}{\mathcal{F}_p}$.
A \emph{database} $D$ is a finite set of ground facts, that is, $D \subseteq \mathcal{F}$.
Let $\mathcal{D}$ denote the domain of databases, defined as $2^{\mathcal{F}}$.
In the following, we assume databases grow monotonically during an extraction process, that is, along a finite sequence $D_0 \subseteq \dots \subseteq D_n$.

\paragraph{Queries.}
We consider \emph{boolean conjunctive queries} over databases.
A conjunctive query $q$ is a conjunction of (possibly negated) atoms whose variables are implicitly existentially quantified.
Given a database $D$, we say that $q$ is \emph{true} in $D$, written $D \models q$, if there exists a substitution $\sigma$ mapping the variables of $q$ to constants such that
(i) for every positive atom $a$ in $q$, $a\sigma \in D$, and
(ii) for every negated atom $\mathit{not}\ a$ in $q$, $a\sigma \notin D$.
A query $q$ is said to be \emph{monotone} if for all databases $D \subseteq D'$, $D \models q$ implies $D' \models q$.
In our setting, monotone queries are exactly those that do not contain negated atoms.
In later sections, monotone queries will be exploited to optimize logic-based control checks via caching.

\paragraph{Logic Programs.}
Let $\mathcal{LP}$ denote the set of logic programs over predicates in $\mathcal{P}$.
Specifically, we consider normal logic programs under the stable model semantics.
A \emph{normal rule} is an expression \emph{``head if body''} of the form
\[
a \leftarrow b_1,\dots,b_m,\ \mathit{not}\ c_1,\dots,\mathit{not}\ c_n,
\]
where $a$, $b_i$, and $c_j$ are atoms.
A \emph{normal program} $\Pi$ is a finite set of normal rules over predicates in $\mathcal{P}$.
Programs are interpreted under their standard ground instantiation with respect to the fixed finite domain of constants.
Given a ground program $\Pi$ and a set of ground atoms $I$, the
\emph{Gelfond--Lifschitz reduct} $\Pi^I$ is obtained by removing all rules whose body contains a negative literal $\mathit{not}\ c$ with $c \in I$, and by deleting all remaining negative literals from the bodies of the remaining rules.
A set of atoms $I$ is a \emph{stable model} (or \emph{answer set}) of $\Pi$ if $I$ is the least model of the reduct $\Pi^I$.
In the intended extraction setting, the programs are constructed so that $\Pi \cup D$ has a unique intended answer set.
We denote it by $\mathit{Ans}_{\Pi}(D)$.

\paragraph{Extraction Oracle.}
We abstract the behavior of a Large Language Model (LLM) by means of an \emph{extraction oracle}.
Let $\mathcal{P}_E = \{p_1,\dots,p_n\} \subseteq \mathcal{P}$ be a finite set of \emph{target predicates}, i.e., predicates whose facts are to be extracted from text.
When needed, we assume that the set of target predicates $\mathcal{P}_E$ is
equipped with a fixed ordering.
Formally, an extraction oracle is a function
\begin{align*}
    \begin{array}{rcl}
        \mathcal{O} : \mathcal{T} \times \mathcal{P}_E &{}\rightarrow{}& \mathcal{D}\\
        (T,p) &{}\mapsto{}& D
    \end{array}
\end{align*}
such that $D \subseteq \mathcal{F}_p$.
Intuitively, $\mathcal{O}(T,p)$ returns a (possibly empty) set of candidate facts for predicate $p$ extracted from the input text $T$.
The oracle may be incomplete or return spurious facts, and no assumptions are made regarding determinism, completeness, or monotonicity of its behavior.
Since at most one oracle call is performed per target predicate, the length of the extraction sequence is bounded by $|\mathcal{P}_E|$.

\paragraph{Prompt Transformations.}
We model prompt construction abstractly as text transformations.
Let $\Gamma : \mathcal{T} \to \mathcal{T}$ be a global transformation encoding macro-task or domain-level instructions, and let $\gamma_p : \mathcal{T} \to \mathcal{T}$ be predicate-specific transformation encoding instructions for extracting predicate $p \in \mathcal{P}_E$.
The extraction oracle is always applied to transformed texts of the form $\gamma_p(\Gamma(T))$, that is, when we write $\mathcal{O}(T,p)$ we actually mean $\mathcal{O}(\gamma_p(\Gamma(T)),p)$.
Stated differently, prompt transformations are modeled as text-to-text functions applied to the input text before oracle invocation;
for clarity of presentation, they are omitted from the notation.

\begin{example}[Running example]\label{ex:running:1}
Let $\Gamma$ be the macro-task function mapping a text $\underline{t}$ to 
{\begin{asp}
    Follow my instructions on how to map the INPUT to the given OUTPUT format. The INPUT text describes a (possibly layered) undirected graph. 
    INPUT: $\underline{t}$     OUTPUT:
\end{asp}}
Let $\gamma_{\mathtt{in\_layer/2}}$ be the transformation associated with \lstinline|in_layer/2|, mapping a text $\underline{t}$ to
{\begin{asp}
    $\underline{t}$ in_layer(layer, node).
    For each layer, list every node that belongs to that layer.
\end{asp}}
Let $T$ be the following unstructured text document:
{\begin{asp}
    The graph has three layers (0 to 2), each containing three nodes. Layer zero contains nodes n1, n2, and n3, while layer 2 contains nodes n7, and n9. There is an edge between n1 and n9, and node n4 is connected to n2 and n8.
\end{asp}}
So, if we call $\mathcal{O}(T,p)$, we are actually calling $\mathcal{O}(T',p)$, where $T' = \gamma_p(\Gamma(T))$ is
{\begin{asp}
    Follow my instructions $[\cdots]$ a (possibly layered) undirected graph.
    INPUT: The graph has $[\cdots]$ n2 and n8.     OUTPUT: in_layer(layer, node).
    For each layer, list every node that belongs to that layer.
\end{asp}}
The expected output are facts
\lstinline|in_layer(0,n1)|,
\lstinline|in_layer(0,n2)|,
\lstinline|in_layer(0,n3)|,
\lstinline|in_layer(2,n7)|, and
\lstinline|in_layer(2,n9)|,
while facts like 
\lstinline|in_layer(2,n8)|, and
\lstinline|in_layer(1,n5)| must not be produced in output as they are not explicitly mentioned in $T$.
{Specifically, we expect that the extraction process terminates reporting only facts that are actually present in the given input, avoiding any speculative interpolation of missing data.}
\hfill$\blacksquare$
\end{example}

\paragraph{Baseline Extraction Pipeline (LLMASP).}
As baseline, we consider a predicate-wise extraction pipeline, as exemplified by LLMASP-style approaches presented by \citeNP{DBLP_conf_ijcai_AlvianoGSR25}.
Given a fixed set of target predicates $\mathcal{P}_E$, we model the baseline extraction pipeline as a function
\begin{align*}
    \begin{array}{rcl}
        \mathsf{LLMASP} : \mathcal{T} \times \mathcal{D} \times \mathcal{LP} &{}\rightarrow{}& \mathcal{D} \\
        (T, D_0, \Pi) &{}\mapsto{}& \mathit{Ans}_\Pi(D_0 \cup \bigcup_{p \in \mathcal{P}_E}{\mathcal{O}(T,p)})
    \end{array}
\end{align*}
where $\mathsf{LLMASP}(T, D_0, \Pi)$ denotes the database produced by the baseline pipeline on input text $T$, program $\Pi$ and initial database $D_0$.
Operationally, the baseline invokes the extraction oracle independently for each target predicate $p \in \mathcal{P}_E$ on the (transformed) input text, collects all the extracted facts together with $D_0$ into a database
$D' = D_0 \cup \bigcup_{p \in \mathcal{P}_E}{\mathcal{O}(T,p)}$, 
and then applies ASP reasoning once using program $\Pi$, yielding the final database
$\mathit{Ans}_\Pi(D')$.
Note that, in this baseline setting, logical reasoning is applied only after the extraction phase and does not influence which target predicates are selected for extraction.
This formalization will allow us to precisely compare the baseline with the logic-guided extraction framework introduced next.

\begin{example}[Continuing Example~\ref{ex:running:1}]\label{ex:running:2}
Let $\mathcal{P}_E$ be $\{$\lstinline|in_layer/2|, \lstinline|layer_size/2|, \lstinline|edge/2|$\}$, 
$\mathcal{O}(T,$\lstinline|layer_size/2|$)$ be $\{$\lstinline|layer_size(0,3)|, \lstinline|layer_size(1,3)|, \lstinline|layer_size(2,3)|$\}$ and 
$\mathcal{O}(T,$\lstinline|edge/2|$)$ be $\{$\lstinline|edge(n1,n9)|, \lstinline|edge(n4,n2)|, \lstinline|edge(n4,n8)|$\}$.
Let $\Pi$ be the following program:
\begin{asp}
$r_1:$  edge(X,Y) :- edge(Y,X).
$r_2:$  warning(unmatched_layer_size, (Layer,Size,Cnt)) :-  layer_size(Layer,Size),
        Cnt = #count{Node : in_layer(Layer,Node)}, Size != Cnt.
\end{asp}
where $r_1$ computes the symmetric closure of \lstinline|edge/2| (the graph is undirected), and $r_2$ detects inconsistencies or missing data in the input text.
$\mathsf{LLMASP}(T,\emptyset,\Pi)$ returns a database including
\lstinline|warning(unmatched_layer_size, (1,3,0))| and
\lstinline|warning(unmatched_layer_size, (2,3,2))|.
In an interactive setting, the derived \lstinline|warning| facts may serve as explanations of inconsistencies or missing data, enabling the system to ask the user for clarification or further details.
\hfill$\blacksquare$
\end{example}

\section{Logic-Guided Extraction Framework}\label{sec:guided}

In the baseline pipeline introduced in Section~\ref{sec:background}, extraction and reasoning are strictly separated:
all target predicates are extracted independently by the oracle, and logical reasoning is applied only once as a post-processing step.
While this approach is simple and effective, it does not exploit logical knowledge during the extraction process itself.

In this section, we introduce a logic-guided extraction framework in which logical reasoning actively controls the extraction process.
The key idea is to interleave oracle calls with logic-based checks over the current database, allowing extraction decisions to depend on facts already obtained and on their logical consequences.
As a result, oracle calls for certain target predicates can be safely skipped without affecting the final extracted information.

The framework presented here generalizes the baseline pipeline and subsumes it
as a special case.
It provides a formal basis for reducing the number of oracle invocations while preserving the extracted facts under suitable assumptions, and it enables additional optimizations such as caching of logic-based checks.

\subsection{Intuition and Overview}\label{sec:intuition}

The logic-guided extraction framework builds on a simple observation:
in many domains, the extraction of certain predicates becomes unnecessary once sufficient information has already been obtained and logically processed.
In the baseline pipeline, however, such dependencies are ignored, as all target predicates are extracted independently before any reasoning takes place.
In contrast, our framework interleaves extraction and reasoning.
After each oracle invocation, the newly extracted facts are combined with the current database and processed by a logic program, yielding additional derived facts.
These derived facts can then be used to decide whether further oracle calls are needed.
Intuitively, if the truth of a target predicate can already be established (or ruled out) based on the current database and logical knowledge, invoking the oracle for that predicate becomes redundant.

To support such decisions, the framework relies on logic-based checks expressed as queries over the current database.
These queries act as \emph{guards} that determine whether a target predicate is eligible for extraction at a given stage.
Crucially, guards may depend both on facts explicitly extracted from the text and on facts derived by logical reasoning.
By construction, this approach may skip oracle calls that would be performed in the baseline pipeline.
Under suitable assumptions on the extraction oracle, however, skipping such calls does not affect the final set of extracted facts.
As a result, the logic-guided framework can reduce the number of oracle invocations while preserving the outcome of the baseline pipeline, and in practice may also mitigate the generation of spurious or hallucinated facts.

\subsection{Extraction Conditions and Guards}\label{sec:guards}

The logic-guided extraction framework is based on the notion of \emph{extraction conditions}, which determine whether invoking the extraction oracle for a target predicate is necessary at a given stage of the process.
These conditions are expressed declaratively as queries over the current database and its logical consequences.

\paragraph{Guards.}
Let $\mathcal{P}_E$ be the set of target predicates, equipped with a fixed ordering.
For each predicate $p \in \mathcal{P}_E$, let $\mathcal{G}_p$ be a finite set of conjunctive queries, called the \emph{guards} for $p$.
Given a database $D$, predicate $p$ is said to be \emph{admissible} for extraction in $D$ if all guards in $\mathcal{G}_p$ are satisfied in $D$, that is, if $D \models g$ for every $g \in \mathcal{G}_p$.
If at least one guard in $\mathcal{G}_p$ is not satisfied in $D$, the extraction of $p$ is skipped at that stage.
Guards are not required to be unique to a predicate and may appear in the guard sets of multiple target predicates.

\paragraph{Monotone and Non-monotone Guards.}
A guard $g \in \mathcal{G}_p$ is said to be \emph{monotone} if it does not contain negated atoms, as defined in Section~\ref{sec:background}.
Monotone guards depend only on the presence of facts in the database and, once satisfied, remain satisfied as the database grows.
Non-monotone guards may depend on the absence of facts and can change their truth value as new information becomes available.
This distinction is important for optimization purposes.
In particular, monotone guards admit persistent caching of positive outcomes, while non-monotone guards generally require re-evaluation when the database is extended.

\subsection{Logic-Guided Extraction Semantics}\label{sec:guided-semantics}

We now define the semantics of logic-guided extraction using the guard-based admissibility conditions introduced in Section~\ref{sec:guards}.
Extraction proceeds over a fixed ordering $\mathcal{P}_E = \langle p_1,\dots,p_n \rangle$ of target predicates.
Each predicate $p_i$ is associated with a logic program $\Pi_{p_i}$, which is applied after processing $p_i$.
We collectively refer to the ordering $\mathcal{P}_E$, the guards $\{\mathcal{G}_{p_i}\}_{i=1}^n$, and the programs $\{\Pi_{p_i}\}_{i=1}^n$ as a \emph{logic-guided extraction configuration}.
Let $\mathcal{C}$ denote the domain of all such configurations, and let $C \in \mathcal{C}$ be a fixed configuration.

\paragraph{Incremental Database Construction.}
The logic-guided extraction process constructs a sequence of databases
$D_0 \subseteq D_1 \subseteq \dots \subseteq D_n$,
where $D_0$ is the initial database.
For each $i \in \{1,\dots,n\}$, the database $D_i$ is obtained from $D_{i-1}$ by processing predicate $p_i$ according to its admissibility in $D_{i-1}$.
If predicate $p_i$ is admissible in $D_{i-1}$, then the extraction oracle is invoked for $p_i$, the extracted facts are added to the current database, and logical reasoning is applied using program $\Pi_{p_i}$.
If predicate $p_i$ is not admissible in $D_{i-1}$, the oracle invocation is
skipped and no logical reasoning is applied.
Formally,
\begin{align}\label{eq:lgx-step}
    D_i =
        \begin{cases}
            \mathit{Ans}_{\Pi_{p_i}}(D_{i-1} \cup \mathcal{O}(T,p_i)) & \text{if } D_{i-1} \models g \text{ for all } g \in \mathcal{G}_{p_i},\\
            D_{i-1} & \text{otherwise.}
        \end{cases}
\end{align}

\paragraph{Logic-Guided Extraction Function.}
The logic-guided extraction semantics defined above induces a function
\begin{align*}
    \begin{array}{rcl}
        \mathsf{LGX} : \mathcal{T} \times \mathcal{D} \times \mathcal{C} &{}\rightarrow{}& \mathcal{D} \\
        (T,D_0,C) &{}\mapsto{}& D_n
    \end{array}
\end{align*}
which maps an input text $T$, an initial database $D_0$, and a configuration
$C \in \mathcal{C}$ to the final database $D_n$ produced according to the
semantics defined above.

\begin{example}[Continuing Example~\ref{ex:running:2}]\label{ex:running:3}
Let $g$ be the guard \lstinline|layer_size(_,_)|, $g'$ be \lstinline|not warning(_,_)|, and
$C$ be the configuration 
\begin{align*}
\left(
    \tuple{
        \text{
            \lstinline|layer_size/2|,
            \lstinline|in_layer/2|,
            \lstinline|edge/2|
        }
    }, 
    \tuple{
        \emptyset,
        \{g\},
        \{g'\}
    },
    \tuple{
        \emptyset,
        \{r_2\},
        \{r_1\}
    }
\right)
\end{align*}
The target predicates are ordered so that \lstinline|in_layer/2| is extracted only if some layer information has been found, as enforced by the guard $g$.
Moreover, the program $\Pi$ is split so that warnings about unmatched layer sizes (rule $r_2$) are derived immediately after extracting \lstinline|in_layer/2|.
If such warnings arise, the guard $g'$ fails, thereby preventing the extraction of \lstinline|edge/2| at later stages.
\hfill$\blacksquare$
\end{example}

\paragraph{Baseline as a Special Case.}
In the proposed framework, a logic program can be seen as a degenerate logic-guided configuration in which no admissibility constraints are imposed and all logical reasoning is deferred to the final step.
From this perspective, the baseline extraction pipeline of Section~\ref{sec:background} is recovered as a special case of the above semantics.
Specifically, for an arbitrary ordering $\mathcal{P}_E = \langle p_1,\dots,p_n \rangle$, we associate an empty set of guards to every predicate, that is, $\mathcal{G}_{p_i} = \emptyset$ for all $i$.
As a consequence, every predicate is admissible for extraction and the oracle is invoked for each $p_i$.
Moreover, we associate the empty program with all predicates except the last one.
Formally, we set $\Pi_{p_i} = \emptyset$ for all $i < n$ and $\Pi_{p_n} = \Pi$, where $\Pi$ is the baseline logic program.
With this choice, no logical reasoning is applied during extraction.
All reasoning is deferred to the processing of the last predicate $p_n$.
In this baseline setting, the order of predicates in $\mathcal{P}_E$ does not affect the final result.
Accordingly, the input expected by $\mathsf{LGX}$ coincides exactly with the program parameter of the $\mathsf{LLMASP}$ function defined in Section~\ref{sec:background}.

\subsection{Baseline Equivalence}\label{sec:correctness}

We now establish that the logic-guided extraction framework is equivalent to the baseline extraction pipeline under a degenerate configuration and a natural assumption on the extraction oracle.

\paragraph{Oracle Assumption.}
We assume that the extraction oracle is \emph{guard-respecting}, in the following sense.
Let $p \in \mathcal{P}_E$ be a target predicate and let $D$ be a database.
If $D \not\models g$ for some guard $g \in \mathcal{G}_p$, then $\mathcal{O}(T,p) = \emptyset$.
Intuitively, when the admissibility conditions for a predicate are not satisfied, invoking the oracle for that predicate would not produce any extracted facts.

\begin{theorem}[Equivalence Theorem]\label{thm:equivalence}
Let $T \in \mathcal{T}$ be an input text and let $D_0 \in \mathcal{D}$ be an initial database.
Let $\Pi$ be a logic program, seen as a degenerate logic-guided configuration  as described in Section~\ref{sec:guided-semantics}.
Let $\mathsf{LGX}(T,D_0,\Pi)$ denote the result of logic-guided extraction under this configuration, and let $\mathsf{LLMASP}(T,D_0,\Pi)$ denote the result of the baseline extraction pipeline.
Under the oracle assumption above, it holds that
$\mathsf{LGX}(T,D_0,\Pi) = \mathsf{LLMASP}(T,D_0,\Pi)$.
\end{theorem}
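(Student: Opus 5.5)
The proof unfolds the recurrence~\eqref{eq:lgx-step} for the degenerate configuration associated with $\Pi$. There we have $\mathcal{G}_{p_i}=\emptyset$ for every $i$, $\Pi_{p_i}=\emptyset$ for $i<n$, and $\Pi_{p_n}=\Pi$. Since every guard set is empty, the condition ``$D_{i-1}\models g$ for all $g\in\mathcal{G}_{p_i}$'' holds vacuously at every stage, so the first case of~\eqref{eq:lgx-step} always applies. Thus the oracle assumption is only needed to cover the skipped calls, and in the degenerate configuration no call is skipped, so it holds trivially; we note this explicitly so the role of the assumption is clear.

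The first step is to show, by induction on $i\in\{0,\dots,n-1\}$, that
\[
D_i = D_0 \cup \bigcup_{j=1}^{i} \mathcal{O}(T,p_j).
\]
The base case $i=0$ is immediate. For the inductive step with $i<n$, we have $D_i=\mathit{Ans}_{\emptyset}(D_{i-1}\cup\mathcal{O}(T,p_i))$, so we need the auxiliary fact that $\mathit{Ans}_{\emptyset}(D)=D$ for every database $D$. This follows from the definition of stable models: $\emptyset\cup D$ is a set of facts, its reduct with respect to any $I$ is just $D$, and its least model is $D$. Hence $D$ is its unique answer set, and in particular the intended one.

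The final step applies the last stage: $D_n=\mathit{Ans}_{\Pi}(D_{n-1}\cup\mathcal{O}(T,p_n))=\mathit{Ans}_\Pi\bigl(D_0\cup\bigcup_{p\in\mathcal{P}_E}\mathcal{O}(T,p)\bigr)$, which is by definition $\mathsf{LLMASP}(T,D_0,\Pi)$. The argument does not depend on the chosen ordering of $\mathcal{P}_E$, because set union is commutative. The only point needing care is that $\mathcal{O}$ is assumed to be a function, so the same oracle outputs $\mathcal{O}(T,p)$ appear in both pipelines. Since the paper explicitly allows nondeterminism, we make this identification explicit: we fix one oracle response per predicate, which is legitimate since each pipeline calls the oracle exactly once per target predicate. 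This, together with the claim that $\mathit{Ans}_\emptyset$ is the identity (one must argue that the \emph{intended} answer set is this unique one), is the main, albeit mild, obstacle; the rest is bookkeeping.
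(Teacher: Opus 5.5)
Your proof is correct and follows the paper's argument. The empty guard sets make every stage take the first case of~\eqref{eq:lgx-step}, and $\mathit{Ans}_\emptyset(D)=D$ reduces the first $n-1$ stages to plain unions. The last stage then applies $\Pi$ to $D_0\cup\bigcup_{i}\mathcal{O}(T,p_i)$, which is $\mathsf{LLMASP}(T,D_0,\Pi)$ by definition; your extra remarks (the reduct argument for $\mathit{Ans}_\emptyset$, the oracle assumption being vacuous here, fixing one oracle response per predicate) only make explicit what the paper's sketch leaves implicit.
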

\begin{proof}[Proof (sketch)]
Let $D_0 \subseteq \dots \subseteq D_n$ be the sequence of databases produced by
$\mathsf{LGX}(T,D_0,\Pi)$.
For all $i \in \{1,\ldots,n\}$, we have $\mathcal{G}_{p_i} = \emptyset$ by assumption.
Hence, $D_{i-1} \models g$ for all $g \in \mathcal{G}_{p_i}$ holds vacuously, and the update rule~\eqref{eq:lgx-step} reduces to
$D_i = \mathit{Ans}_{\Pi_{p_i}}(D_{i-1} \cup \mathcal{O}(T,p_i))$.

By construction of the degenerate configuration, $\Pi_{p_i} = \emptyset$ for all $i < n$, and $\Pi_{p_n} = \Pi$.
Moreover, $\mathit{Ans}_\emptyset(D) = D$ for any database $D$.
Therefore, for all $i \in \{1,\ldots,n\}$,
\[
    D_i =
        \begin{cases}
            D_{i-1} \cup \mathcal{O}(T,p_i) & \text{if } i < n,\\
            \mathit{Ans}_{\Pi}(D_{n-1} \cup \mathcal{O}(T,p_n)) & \text{if } i = n.
        \end{cases}
\]

Unfolding the construction of $D_n$, we obtain
\[
\mathsf{LGX}(T,D_0,\Pi) = D_n
= \mathit{Ans}_{\Pi}\!\left(D_0 \cup \bigcup_{i=1}^{n} \mathcal{O}(T,p_i)\right).
\]
By definition of $\mathsf{LLMASP}$, the right-hand side coincides with
$\mathsf{LLMASP}(T,D_0,\Pi)$.
\end{proof}

\subsection{Extraction Efficiency and Guard-Based Optimization}\label{sec:efficiency}

A key advantage of logic-guided extraction is that admissibility checks allow
the process to avoid unnecessary oracle invocations while preserving the final
outcome.
In $\mathsf{LLMASP}$, the oracle is invoked once for every target predicate,
whereas $\mathsf{LGX}$ invokes it only when guards are satisfied in the current
database.
Guard outcomes can be cached to avoid redundant guard evaluations (and thus
solver calls) without affecting correctness.

\begin{theorem}[Efficiency and Sound Guard Reuse]\label{thm:efficiency}
Let $D_0 \subseteq \dots \subseteq D_n$ be the database sequence constructed by
$\mathsf{LGX}(T,D_0,C)$.
Then $\mathsf{LGX}$ performs at most as many oracle calls as $\mathsf{LLMASP}$,
and strictly fewer whenever some target predicate is not admissible during the
extraction process.
Moreover, guard evaluations can be safely reused:
(i) if $D_i = D_j$ with $i<j$, then $D_j \models g$ if and only if $D_i \models g$;
and (ii) if $g$ is monotone and $D_i \models g$ for some $i \in \{0,\dots,n-1\}$,
then $D_j \models g$ for all $j \geq i$.
\end{theorem}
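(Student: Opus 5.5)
The plan is to handle the three claims separately: the oracle-count bound, then the two guard-reuse properties (i) and (ii). Each follows almost directly from the update rule~\eqref{eq:lgx-step} and the definition of $D \models q$.

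For the oracle-count claim, I would count calls directly. By definition, $\mathsf{LLMASP}$ invokes $\mathcal{O}(T,p)$ exactly once for each $p \in \mathcal{P}_E$, so it makes exactly $n = |\mathcal{P}_E|$ calls. In $\mathsf{LGX}$, rule~\eqref{eq:lgx-step} processes each $p_i$ exactly once, at step $i$. It calls the oracle only in the first case, when $D_{i-1} \models g$ for all $g \in \mathcal{G}_{p_i}$. So the number of calls is $|\{ i : p_i \text{ admissible in } D_{i-1}\}| \le n$. If some $p_i$ is not admissible in $D_{i-1}$, that index is missing from the set, and the count is at most $n-1 < n$.

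I will read ``not admissible during the extraction process'' as ``not admissible at the stage $i$ where it is processed''. This is the only stage at which admissibility matters, and I will make this reading explicit, since it is the one point where the statement is informal.

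For (i), observe that whether $D \models g$ holds depends only on the set $D$: the definition quantifies over substitutions and tests membership of $a\sigma$ in $D$. Hence if $D_i = D_j$ as sets, then $D_i \models g$ iff $D_j \models g$, regardless of $i<j$. Monotonicity of the chain is not even needed here.

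For (ii), I use the chain property $D_i \subseteq D_j$ for $j \ge i$. This needs a short justification, which I expect to be the only real obstacle. The paper assumes databases grow monotonically, but $\mathit{Ans}_{\Pi_{p_i}}(D_{i-1}\cup\mathcal{O}(T,p_i))$ contains $D_{i-1}$ only because facts in the input are themselves rules with empty body. Every answer set of $\Pi \cup D$ therefore includes $D$, since the facts survive in the reduct and belong to its least model. So in both cases of~\eqref{eq:lgx-step}, $D_{i-1} \subseteq D_i$, and by induction $D_i \subseteq D_j$.

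Now let $g$ be monotone, i.e.\ it has no negated atoms, and let $\sigma$ witness $D_i \models g$. Every positive atom $a$ of $g$ satisfies $a\sigma \in D_i \subseteq D_j$, and condition (ii) of the definition is vacuous. So the same $\sigma$ witnesses $D_j \models g$. This is exactly the monotonicity property stated in Section~\ref{sec:background}, which I would invoke directly.
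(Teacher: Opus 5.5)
Your proposal is correct and matches the paper's proof sketch step by step: you count oracle calls directly from the update rule~\eqref{eq:lgx-step}, get (i) from the fact that $D \models g$ depends only on the set $D$, and get (ii) from the monotonicity of $g$ together with $D_i \subseteq D_j$. The one difference is that you justify $D_{i-1} \subseteq D_i$ explicitly (input facts survive in the Gelfond--Lifschitz reduct and so belong to every answer set), whereas the paper simply says this holds by construction.
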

\begin{proof}[Proof (sketch)]
Oracle call reduction is immediate: $\mathsf{LLMASP}$ invokes the oracle once for
every target predicate in $\mathcal{P}_E$, whereas $\mathsf{LGX}$ invokes it only
when the corresponding guards are satisfied.
For guard reuse, (i) follows from the fact that conjunctive queries have the
same truth value on identical databases, and (ii) is a consequence of the monotonicity of $g$ and because $D_i \subseteq D_j$ by construction.
\end{proof}

The above theorem justifies execution strategies that cache guard outcomes
according to these two principles.
Such caching always agrees with fresh guard evaluation, and therefore preserves
admissibility decisions and yields the same final database as the uncached
semantics of $\mathsf{LGX}$.
Consequently, caching affects only efficiency, not correctness.

\section{Implementation}\label{sec:implementation}

This section presents a concrete realization of the logic-guided extraction framework defined in Section~\ref{sec:guided}, given in Algorithm~\ref{alg:lgx}, and shows how its declarative semantics can be operationalized in practice.
Implementation choices such as guard caching affect only efficiency and do not alter the formal semantics of the framework.

\begin{algorithm}[t]
\DontPrintSemicolon
\SetKwInOut{Input}{Input}
\SetKwInOut{Output}{Output}
\caption{\textsc{LGX-Exec}$(T,D_0,C)$: Logic-guided extraction with guard caching}\label{alg:lgx}
\Input{Text $T$, initial database $D_0$,
configuration $C = (\langle p_1,\dots,p_n\rangle,\{\mathcal{G}_{p_i}\},\{\Pi_{p_i}\})$}
\Output{Final database $D$}

$(D, \mathcal{K}^{\top}_{m}, \mathcal{K}^{\bot}_{m}, \mathcal{K}^{\top}_{n}, \mathcal{K}^{\bot}_{n}) := (D_0, \emptyset, \emptyset, \emptyset, \emptyset)$\tcp*[f]{result and cache initialization}

\For(\tcp*[f]{process target predicates}){$i := 1$ \KwTo $n$}{
  $\mathit{admissible} := \top$\;
  
  \ForEach{guard $g \in \mathcal{G}_{p_i}$}{
      \uIf(\tcp*[f]{cache hit}){$g \in \mathcal{K}^{\top}_{\mathit{mono}(g)} \cup \mathcal{K}^{\bot}_{\mathit{mono}(g)}$}{
        $\mathit{hold} := g \in \mathcal{K}^{\top}_{\mathit{mono}(g)}$\tcp*[f]{retrieve answer from cache}
      }
      \Else(\tcp*[f]{cache miss}){
        $\mathit{hold} := \mathit{q} \in \mathit{Ans}_{\{\mathit{q} \leftarrow g\}}(D)$\tcp*[f]{answer query}
        
        $\mathcal{K}^{\mathit{hold}}_{\mathit{mono}(g)} := \mathcal{K}^{\mathit{hold}}_{\mathit{mono}(g)} \cup \{g\}$\tcp*[f]{cache answer}
      }

    \If(\tcp*[f]{if some guard is false}){$\mathit{hold} = \bot$}{
      $\mathit{admissible} := \bot$\tcp*[f]{$p_i$ is not admissible}
    }

  }
  
  \If{$\mathit{admissible}$}{
    $D' := \mathit{Ans}_{\Pi_{p_i}}(D \cup \mathcal{O}(T, p_i))$\tcp*[f]{extract and derive facts}
    
    \If(\tcp*[f]{if there are new facts}){$D' \neq D$}{
      $(D, \mathcal{K}^{\bot}_{m}, \mathcal{K}^{\top}_{n}, \mathcal{K}^{\bot}_{n}) := (D', \emptyset, \emptyset, \emptyset)$\tcp*[f]{update DB, invalidate cache}
    }
  }
}

\Return{$D$}\;

\end{algorithm}

\paragraph{Guard evaluation.}
For a predicate $p_i$, admissibility is determined by evaluating the guards in $\mathcal{G}_{p_i}$ against the current database $D$.
A guard $g$ is evaluated by checking whether a fresh atom $q$ is derivable from the program $\{q \leftarrow g\}$ together with the facts in $D$, that is, whether $q \in \mathit{Ans}_{\{ q \leftarrow g \}}(D)$.
Guard evaluation conceptually corresponds to the conjunction of the guards in $\mathcal{G}_{p_i}$.
As soon as one guard is found to be false, the predicate $p_i$ is deemed not admissible and no further guard evaluations are required.
If all guards hold, $p_i$ is admissible, the extraction oracle is invoked, and the facts returned by the oracle are combined with the current database and processed by the associated logic program $\Pi_{p_i}$.

\paragraph{Guard caches.}
In what follows, we use $\top$ and $\bot$ to denote the Boolean outcomes
\emph{true} and \emph{false}.
To optimize guard evaluation, the algorithm maintains a family of cache sets $\mathcal{K}^{\tau}_{\mu}$, where $\tau \in \{\top,\bot\}$ denotes the truth value of a guard and $\mu \in \{m,n\}$ denotes whether the guard is monotone or non-monotone.
A guard $g$ belongs to $\mathcal{K}^{\tau}_{\mu}$ if it has previously been evaluated with truth value $\tau$ and $\mathit{mono}(g)=\mu$, where $\mathit{mono}$ determines the monotonicity of guards.
Cached results are reused whenever possible.
When the database is extended, cached results for non-monotone guards and cached negative results for monotone guards are invalidated, while cached positive results for monotone guards are preserved.
This policy is sound by Theorem~\ref{thm:efficiency} and affects only the number of guard evaluations performed, without altering admissibility decisions or the final database produced.

\section{Experimental Evaluation}
\label{sec:experiments}

In this section, we empirically evaluate the $\mathsf{LGX}$ framework to assess its effectiveness in reducing LLM oracle calls, its impact on extraction quality, and the efficiency of the proposed caching strategies.
{Source files and datasets are available online at \url{https://github.com/Xiro28/LGX}.}

\paragraph{Implementation and Setup.}
From an implementation standpoint, we observe how the two cache sets $\mathcal{K}^\bot_m$ and $\mathcal{K}^\bot_n$ can be safely combined, since entries of both sets require re-evaluation once the database has changed. 
Our framework is implemented using \textsc{clingo~5.8} by \citeNP{DBLP:journals/tplp/GebserKS11} as the underlying ASP solver, and \textsc{Meta Llama~3.1} proposed by \citeNP{DBLP:journals/corr/abs-2407-21783} as the extraction oracle, deployed in two configurations: 
a lightweight version (\texttt{8B}, hereafter referred to as \emph{Small}) and a medium-capacity version (\texttt{70B}, referred to as \emph{Medium}).
{Additionally, we performed robustness tests with a third high-capacity configuration based on \textsc{GPT-OSS} presented by \citeNP{openai2025gptoss120bgptoss20bmodel}, denoting its \texttt{120B} version as \emph{Large}, to evaluate the consistency of the framework’s performance across model architectures.}

To ensure a fair and rigorous comparison, we simulated the $\mathsf{LLMASP}$ baseline using the $\mathsf{LGX}$ engine itself, adopting the degenerate configuration described in Section~3.4. 
This approach eliminates potential implementation biases, ensuring that performance deltas are solely attributable to the logic-guided strategy. 
Furthermore, we minimized stochastic variance by setting the LLM sampling temperature to $0$ and employing a persistent cache for oracle calls. 
This guarantees that identical queries yield the same candidate facts across different system runs, effectively isolating the impact of the logic-guided control flow on both extraction efficiency and quality.

\begin{table}[t]
\centering
\caption{
Experimental results on Graph (G) and Logic Puzzle (LNRS) benchmarks. We report the number of Oracle interactions ($\mathcal{O}$ Calls), the enforcement strategy, and accuracy metrics (F1-Score, Perfect Rate).
}
\label{tab:main_results}
\setlength{\tabcolsep}{3.5pt}
\begin{tabular}{cclclcc}
\toprule
\textbf{Bench.} & \textbf{$\mathcal{O}$} & \textbf{Pipeline} & \textbf{$\mathcal{O}$ Calls} & \textbf{Guard Enforcement} & \textbf{F1-Score} & \textbf{Perfect Rate} \\ 
\midrule
\multirow{11}{*}{{G}} 
  & \multirow{3}{*}{\shortstack{Small\\(\texttt{8B})}} 
      & \multirow{2}{*}{$\mathsf{LLMASP}$} & \multirow{2}{*}{240} & None & 0.800 & 0.317 \\
  & & & & \emph{A posteriori} & \textbf{0.893} & \textbf{0.617} \\
  \addlinespace[0.4em]
  & & $\mathsf{LGX}$ & \textbf{203} & By design & \textbf{0.893} & \textbf{0.617} \\
  \cmidrule(lr){2-7}

  & \multirow{3}{*}{\shortstack{Medium\\(\texttt{70B})}} 
      & \multirow{2}{*}{$\mathsf{LLMASP}$} & \multirow{2}{*}{240} & None & 0.870 & 0.417 \\
  & & & & \emph{A posteriori} & \textbf{0.952} & \textbf{0.850} \\
  \addlinespace[0.4em]
  & & $\mathsf{LGX}$ & \textbf{193} & By design & \textbf{0.952} & \textbf{0.850} \\ 

  \cmidrule(lr){2-7}

  & \multirow{3}{*}{\shortstack{Large\\(\texttt{120B})}} 
      & \multirow{2}{*}{$\mathsf{LLMASP}$} & \multirow{2}{*}{240} & None & 0.944 & 0.667 \\
  & & & & \emph{A posteriori} & \textbf{1.000} & \textbf{1.000} \\
  \addlinespace[0.4em]
  & & $\mathsf{LGX}$ & \textbf{186} & By design & \textbf{1.000} & \textbf{1.000} \\

\midrule

\multirow{7}{*}{{LNRS}} 
  & \multirow{3}{*}{\shortstack{Small\\(\texttt{8B})}} 
      & \multirow{2}{*}{$\mathsf{LLMASP}$} & \multirow{2}{*}{2998} & None & 0.440 & 0.000 \\
  & & & & \emph{A posteriori} & \textbf{0.842} & \textbf{0.023} \\
  \addlinespace[0.4em]
  & & $\mathsf{LGX}$ & \textbf{876} & By design & \textbf{0.842} & \textbf{0.023} \\
  \cmidrule(lr){2-7}

  & \multirow{3}{*}{\shortstack{Medium\\(\texttt{70B})}} 
      & \multirow{2}{*}{$\mathsf{LLMASP}$} & \multirow{2}{*}{3072} & None & 0.572 & 0.000 \\
  & & & & \emph{A posteriori} & \textbf{0.963} & \textbf{0.469} \\
  \addlinespace[0.4em]
  & & $\mathsf{LGX}$ & \textbf{864} & By design & \textbf{0.963} & \textbf{0.469} \\ 

  \cmidrule(lr){2-7}

  & \multirow{3}{*}{\shortstack{Large\\(\texttt{120B})}} 
      & \multirow{2}{*}{$\mathsf{LLMASP}$} & \multirow{2}{*}{3072} & None & 0.632 & 0.000 \\
  & & & & \emph{A posteriori} & \textbf{0.947} & \textbf{0.555} \\
  \addlinespace[0.4em]
  & & $\mathsf{LGX}$ & \textbf{864} & By design & \textbf{0.947} & \textbf{0.555} \\ 

\bottomrule
\end{tabular}
\end{table}

\paragraph{Benchmarks.}
Our evaluation relies on two primary benchmarks designed to assess different aspects of the logic-guided extraction.

\noindent \textbf{Graph Benchmark (G).}
Based on our running example of layered graphs, this benchmark compares \textsc{LGX} against the \textsc{LLMASP} baseline under two \emph{guard enforcement} strategies (see Table~\ref{tab:main_results}):
(i) \emph{None}, representing a na\"ive, blind extraction; and
(ii) \emph{A posteriori}, where invalid results are pruned post-hoc using manually injected constraints.

\noindent \textbf{Logic Puzzle Benchmark (LNRS).}
Adapted from the work by \citeNP{DBLP_conf_ijcai_AlvianoGSR25}, this dataset aggregates four ASP domains (Labyrinth, Nomystery, Ricochet Robots and Sokoban) to evaluate mixed-domain handling.
In this setting, an initial \emph{domain-identifier} extraction acts as a hierarchical guard, restricting all subsequent oracle calls to the specific puzzle schema and preventing cross-domain noise.

\paragraph{Performance and Efficiency Analysis.}
Table~\ref{tab:main_results} summarizes the experimental results on the Graph (G) and Logic Puzzle (LNRS) benchmarks. 
The empirical data supports three key observations regarding the impact of $\mathsf{LGX}$:
\begin{itemize}
    \item 
    \textbf{Drastic Reduction in Traffic.} 
    $\mathsf{LGX}$ significantly lowers the number of interactions with the oracle compared to the baseline. 
    In the LNRS benchmark, calls are reduced by approximately 72\% (dropping from 3072 to 864 calls with the medium oracle). 
    This confirms that the logic-guided pruning effectively identifies and discards inconsistent branches before expensive queries are made.

    \item \textbf{Accuracy Equivalence.} 
    Consistent with Theorem~\ref{thm:equivalence}, the accuracy of $\mathsf{LGX}$ (enforcement by design) is identical to the \emph{a posteriori} baseline. 
    This empirically verifies that our pruning strategy is sound: it optimizes efficiency without compromising the quality of the solution.

    \item \textbf{Scalability with Complexity.} 
    The efficiency gap widens significantly as the logical structure deepens. 
    While the reduction is moderate for the structurally simpler G benchmark ($\sim$15--20\%), it becomes substantial for the hierarchically structured LNRS benchmark. 
    This confirms that $\mathsf{LGX}$ excels in rich domains, where guard dependencies enable aggressive pruning of oracle calls.
\end{itemize}

\noindent \textbf{Quality Analysis (Benchmark G).}
{Na\"ive extraction generates spurious facts, such as invalid layers not explicitly mentioned in the input text, or facts that are not expected to be extracted due to failing guards.}
These structural inconsistencies severely impact F1-score and Perfect Rate.
While $\mathsf{LLMASP}$ can recover quality via \emph{a posteriori} filtering, $\mathsf{LGX}$ ensures this consistency \emph{by design}.
This allows $\mathsf{LGX}$ to achieve optimal quality with strictly fewer oracle calls (Table~\ref{tab:main_results}), unlike the fixed-cost baseline.

\noindent \textbf{Efficiency Analysis (Benchmark LNRS).}
Efficiency is critical in the mixed-domain LNRS benchmark.
While the baseline indiscriminately queries all predicates, incurring high costs, $\mathsf{LGX}$ matches the optimal accuracy of the \emph{a posteriori} configuration at a fraction of the expense.
By proactively pruning irrelevant queries rather than filtering results post-hoc, $\mathsf{LGX}$ avoids the prohibitive overhead of the standard pipeline.

\noindent \textbf{ASP Solver Overhead.}
Our caching strategy significantly reduces internal reasoning costs.
In LNRS, shared guards allow $\mathsf{LGX}$ to avoid between 61.4\% and 82.6\% of redundant ASP solver calls.
Crucially, even where caching is less effective (Benchmark G), the latency introduced by guard checks is negligible compared to the massive runtime savings achieved by reducing oracle interactions.

\section{Related Work}\label{sec:rw}

Existing work on combining LLMs with logic-based reasoning can be categorized according to the role assigned to the symbolic component. 
In some approaches, logic is the target representation generated from text; in others, it is used as a validation layer, or as part of a probabilistic neuro-symbolic semantics.
We discuss these lines of work with emphasis on their relation to extraction from natural language and to the use of ASP as a reasoning component.

\paragraph{Logic as a target language.}
A first class of approaches uses natural-language processing, and more recently LLMs, to construct a symbolic representation that is then given to a
logic-based solver. 
This view extends earlier work in which natural-language content was encoded directly in ASP or related declarative languages. 
For example, \citeNP{DBLP:journals/tplp/PendharkarBSG22} represent English passages through a Neo-Davidsonian formalism and use common-sense knowledge from WordNet, a lexical database presented by \citeNP{DBLP:journals/cacm/Miller95}, to answer questions over the resulting program.
With the advent of LLMs, part of the symbolic encoding can be delegated to neural generation. 
\citeNP{DBLP:conf/kr/IshayY023} introduce a Generate--Define--Test methodology to obtain entities, relations, and rules from text; \citeNP{DBLP:conf/aaai/Ishay025} extend this direction to action languages for temporal and causal reasoning.
Similarly, the STAR framework of \citeNP{DBLP:conf/iclp/RajasekharanZ0G23} translates natural-language knowledge into a representation processed by s(CASP), supporting goal-directed and explainable derivations. 
These approaches differ from ours because the LLM is primarily used to construct the symbolic theory itself.
In our setting, the symbolic schema and the control knowledge are fixed by the framework, while the LLM is used to produce candidate facts.

\paragraph{Logic-based extraction with fixed schemas.}
A second line of work assumes a fixed relational or logical schema and uses the LLM to populate it from natural language. The [LLM]+ASP framework of \citeNP{DBLP:conf/acl/YangI023} and LLMASP by \citeNP{DBLP_conf_ijcai_AlvianoGSR25} are representative of this direction and motivate the predicate-wise abstraction adopted earlier in the paper.
Related ideas also appear in systems such as AutoCompanion by \citeNP{DBLP:journals/tplp/ZengRBWAG24}, where s(CASP) is used as a symbolic component in a dialogue setting. 
These approaches provide an important reference point for our work because they separate extraction from reasoning through a fixed schema. The distinction is that, in our framework, part of the symbolic reasoning is moved from post-processing to extraction control: logical conditions are evaluated during the construction of the database and can affect which extraction requests are issued.

\paragraph{Probabilistic neuro-symbolic reasoning.}
A distinct family of neuro-symbolic systems integrates neural predictions directly into a probabilistic logical semantics. DeepProbLog by \citeNP{DBLP:conf/nips/ManhaeveDKDR18} extends probabilistic logic programming with neural predicates, and NeurASP by \citeNP{DBLP:conf/ijcai/YangIL20} combines ASP with probability distributions associated with atoms generated by neural classifiers. These frameworks provide a principled account of uncertain neural outputs and support forms of learning or probabilistic inference. 
Our setting is different in two respects. First, the LLM is treated as a pre-trained black-box oracle rather than as a trainable neural component integrated into the semantics. 
Second, extracted atoms are handled as crisp candidate facts, and the role of ASP is to derive consequences, check consistency, and control admissibility rather than to assign or combine probabilities.

\paragraph{Post-hoc validation and refinement.}
Several approaches use symbolic reasoning, execution feedback, or self-reflective procedures only after a neural model has generated an output.
For instance, \citeNP{DBLP:conf/ijcai/EiterGHO23} use ASP to compute  contrastive explanations for visual question-answering results. 
In the LLM literature, Self-Refine by \citeNP{DBLP:conf/nips/MadaanTGHGW0DPY23} and Reflexion by \citeNP{DBLP:conf/nips/ShinnCGNY23} introduce iterative feedback loops in which generated answers are critiqued and revised. 
These methods can improve output quality by detecting or correcting problematic outputs after generation. 
The framework proposed here addresses a complementary problem: it uses logic before selected extraction calls are made, so that some irrelevant or inadmissible queries are never issued to the oracle.


\section{Conclusion}\label{sec:conclusion}


We introduced a logic-guided data extraction framework that interleaves LLM-based extraction with Answer Set Programming, using logic-based admissibility conditions to actively control the extraction process.
By exploiting logical dependencies during extraction, the proposed framework reduces the number of oracle invocations while preserving the final extracted information under mild assumptions.
We formally characterized the framework, established its equivalence with a baseline extraction pipeline, and identified efficiency properties that enable practical optimizations.
We presented an execution algorithm that realizes the proposed semantics and exploits guard caching based on monotonicity and {unchanged database states}.
An experimental evaluation on ASP-derived benchmarks confirms that the framework substantially reduces the number of LLM calls and, in practice, mitigates the generation of spurious extracted facts.

Several directions for future work remain.
On the implementation side, guard monotonicity is currently treated as an external property provided by the configuration.
An interesting extension would be to automatically infer monotonicity properties of guards, enabling more aggressive and transparent caching strategies without additional user annotation.
More broadly, we plan to investigate dynamic guard evaluation strategies, incremental reasoning techniques, and tighter integrations between logic-based control and prompt construction.
In particular, logic-driven templating approaches for prompt generation (such as the Mustache-based mechanisms adopted in ASP Chef by \citeNP{DBLP:journals/tplp/AlvianoRF25}) could be leveraged to dynamically adapt extraction prompts based on derived facts and admissibility conditions.
These directions further strengthen the role of non-monotonic reasoning as a principled control mechanism for LLM-based data extraction.
{Furthermore, exploring the integration of goal-directed engines (e.g., \textsc{Prolog} systems with incremental tabling) for guard evaluation presents a promising path toward more granular and reactive extraction strategies, effectively combining local dependency tracking with ASP global constraint-solving capabilities.}

{
\section*{Acknowledgments}
This work was supported 
by the Italian Ministry of Health (MSAL)
    under POS projects \linebreak CAL.HUB.RIA (CUP H53C22000800006) and RADIOAMICA (CUP H53C22000650006);
by the Italian Ministry of Enterprises and Made in Italy
    under project STROKE 5.0 (CUP \linebreak B29J23000430005);
    under PN RIC project ASVIN ``Assistente Virtuale Intelligente di Negozio'' (CUP B29J24000200005);
by Regione Calabria
    NextGenGuides ``Innovazione Tecnologica per le Guide Turistiche del Futuro tramite l'ausilio di tecnologie innovative AI e sistemi di geolocalizzazione avanzata'' (CUP J39I24002040005);
    and MOZART ``Modelli e tecniche di mOdernizzazione di applicaZioni e data silos a supporto di clienti esterni, field force e impiegati di backoffice basati su AI GeneRativa e apprendimento auTomatico'' (CUP J49I24001740005);
and by the LAIA lab (part of the SILA labs). 
Mario Alviano is member of Gruppo Nazionale Calcolo Scientifico-Istituto Nazionale di Alta Matematica (GNCS-INdAM).
}

\bibliographystyle{acmtrans}
\bibliography{tclp_manuscript}

\label{lastpage}
\end{document}